\newcommand{\corr}{(\Letter)}
\newcommand{\ourmethod}{\textsc{GraphWeave}\xspace}
\NewDocumentCommand{\LeftComment}{s m}{%
  \Statex \IfBooleanF{#1}{\hspace*{\ALG@thistlm}}\(\triangleright\) #2}
\DeclareMathOperator*{\argmin}{arg\,min}
\newcommand{\bv}{\ensuremath{\bm v}\xspace}
\newcommand{\bw}{\ensuremath{\bm w}\xspace}
\newcommand{\bone}{\ensuremath{\bm 1}\xspace}
\begin{document}
\title{GraphWeave : Interpretable and Robust Graph Generation via Random Walk Trajectories}

\titlerunning{GraphWeave}
% If the full title of your paper is short enough to also fit in the running head, you can omit the abbreviated paper title here. You can check as follows: if you comment out the \titlerunning line, something will appear in the header of all odd-numbered pages of your PDF from page 3 onward. This something is either the full title (in which case all is well), or the error message "Title Suppressed Due to Excessive Length". If this error message appears, you're going to want to provide an abbreviated title within the \titlerunning command, because if you won't do it, Springer will do it for you.

%N.B.: Author information (both in the \author{} and \authorrunning{} command) should only be present in the Camera-Ready Version of your paper. The version that you initially submit for review, ought to be double-blind. So, when initially submitting your paper, use:
%\author{Author information scrubbed for double-blind reviewing}
\author{Rahul Nandakumar \inst{1} \corr, Deepayan Chakrabarti \inst{1}}
% You may leave out the orcidID information, if you want to.
% Use \corr to indicate the corresponding author. Note the spacing around the \corr command. Only one author can be the corresponding author.

%N.B.: comment out the \authorrunning{} command for the double-blind version of your paper submitted for review. Later, if your paper is accepted, use the command for the Camera-Ready Version.
\authorrunning{R. Nandakumar and D. Chakrabarti}
% First names are abbreviated in the running head.
% If there is one author, write 'A.L. Benjamin'.
% If there are two authors, write 'A.L. Benjamin and C.C. Broadus Jr.'
% If there are more than two authors, '[...] et al.' is used.

\institute{The University of Texas at Austin, Austin TX, 78713, USA
\email{\{rahul.nandakumar,deepay\}@utexas.edu}}
% \and
% Fictional West Coast University, Long Beach CA 90840, USA \email{ccb@fwcu.fake}
% \and
% Secondary European Affiliation, Tiergartenstr. 17, 69121 Heidelberg, Germany
% \email{lncs@springer.com}}

\maketitle              % typeset the header of the contribution

\begin{abstract}
Given a set of graphs from some unknown family, we want to generate new graphs from that family.
Recent methods use diffusion on either graph embeddings or the discrete space of nodes and edges.
However, simple changes to embeddings (say, adding noise) can mean uninterpretable changes in the graph.
In discrete-space diffusion, each step may add or remove many nodes/edges.
It is hard to predict what graph patterns we will observe after many diffusion steps.
Our proposed method, called \ourmethod, takes a different approach.
We separate pattern generation and graph construction. 
To find patterns in the training graphs, we see how they transform vectors during random walks.
We then generate new graphs in two steps.
First, we generate realistic random walk ``trajectories'' which match the learned patterns.
Then, we find the optimal graph that fits these trajectories.
The optimization infers all edges jointly, which improves robustness to errors.
On four simulated and five real-world benchmark datasets,  \ourmethod outperforms existing methods.
The most significant differences are on large-scale graph structures such as PageRank, cuts, communities, degree distributions, and flows.
\ourmethod is also $10$x faster than its closest competitor.
Finally, \ourmethod is simple, needing only a transformer and standard optimizers. \\ \\ 
Code is available at \url{https://github.com/rahulnanda1999/GraphWeave}.

\keywords{Graph Generation \and Diffusion \and Random Walk \and Trajectory}
\end{abstract}

\section{Introduction}
\label{sec:intro}
%\txtred{interpretability to deal with fabricated data? adversarial attacks?}

Suppose that we have a set of molecules with some desirable property. For example, these molecules bind to a target protein to treat a disease. Our goal is to find other molecules that have this property. We can formalize this as a graph generation problem. Each molecule is a graph of atoms connected by bonds. The desirable property corresponds to some unknown patterns common to the given graphs. We want to generate new graphs that possess these patterns automatically. As another example, suppose we want to detect bots in a social network. Bots and regular users have different linkage patterns. However, we may have too few examples of bots to train a classifier.
To augment the training data, we can generate synthetic graphs whose link patterns match those of the known bots.
This can improve classification accuracy without increasing the cost.

In this paper, we tackle the problem of generating new graphs whose structure matches a set of training graphs. We do not consider node or edge features, which we can infer by a post-processing step. For instance, for molecule graphs, we can infer a node's feature (what atom it is) from its degree (number of bonds), and this determines its edge feature (bond strengths).
Now, even generating the graph structure is a complex problem.
Small-scale patterns (e.g., a benzene ring) might be important for some cases. In other applications, large-scale patterns may matter more (e.g., the ratios of various atoms, i.e., the degree distribution). No method can match all possible patterns.
Clarity about the patterns a method tries to match improves its interpretability.

However, existing methods rarely make their choices explicit.
One class of methods creates graphs by diffusion on the space of graph embeddings. They start from a random embedding, iteratively change it, and map the final embedding to a graph~\cite{evdaimon2024neuralgraphgeneratorfeatureconditioned,zhou2024unifyinggenerationpredictiongraphs}.
But even simple changes in embedding space (e.g., adding noise) may mean complex and unintuitive modifications to the graph structure.
Hence, the generative process is hard to interpret.

\begin{figure}[t]
    \centering
    \includegraphics[width=\textwidth]{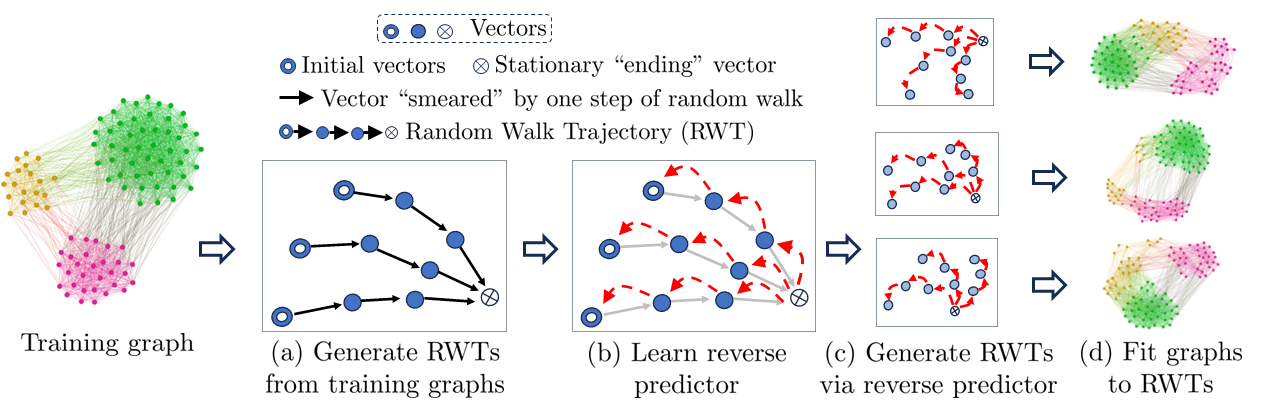}
    \caption{{\em Overview of \ourmethod:}
    (a)~Given one or more training graphs, we generate random walk trajectories (RWTs) from various starting vectors.
(b)~We learn to predict the previous step of a trajectory.
(c)~To generate a new graph, we first apply the reverse predictor several times on an ``ending'' vector (shown by $\otimes$).
We show that the ending vectors are simple to generate (Theorem~\ref{thm:endvec}).
(d)~Second, we find the optimal graph that fits the generated RWTs.
{\em The process works even with one training graph.}
Indeed, we generated the three graphs on the right from the single graph on the left (colors are added for clarity).
    }
    \vspace{-1em}
    \label{fig:overview}
\end{figure}

Another class of methods changes the graph structure instead of its embedding. For instance, some methods apply local changes to the graph in each iteration \cite{you2018graphrnn,shi2020graphaf}. Each step might add or remove a few nodes and edges, so the changes are intuitive.
However, these local changes must add up to the desired global patterns. The need for coordinated local changes makes the process sensitive to errors. Other approaches make global changes to the graph structure in each iteration \cite{vignac2022digress,bojchevski2018netgan}.
While this approach is very flexible, predicting what patterns will result from a series of complex changes is difficult.
This affects the interpretability of such methods.
\begin{center}
    \fbox{%
        \parbox{0.9\textwidth}{\centering\em How can we generate graphs matching multi-scale patterns in an interpretable way?}%
    }
\end{center}
\medskip
We make two design choices to achieve this goal.
First, we focus on {\em patterns that can be learned from random walks} on graphs. Specifically, we construct Random Walk Trajectories (RWTs) that track how vectors evolve over random walks.
We show that several standard graph families have unique RWT signatures. Hence, RWTs can intuitively capture helpful patterns.
Also, many applications are based on random walks.
So, graphs generated with the right RWT signatures can immediately positively impact such applications.

Our second design choice is to generate graphs via {\em optimization} on RWTs. In other words, we generate RWTs and find the optimal graph that fits these RWTs. Our approach separates the matching of patterns (via RWTs) from the graph construction (by optimization). 
This ``division of labor'' offers many benefits.
RWTs are easier to generate than graphs since RWTs are naturally in a vector space. Graph construction via optimization increases flexibility. For instance, we can impose constraints (e.g., sparsity) or add regularization for robustness.
%To our knowledge, {\em we are the first to demonstrate this optimization-based approach.}

% \subsection{Our Contributions}
% Our proposed method, named \ourmethod, generates Random Walk Trajectories (RWTs) and then generates the graph that optimally fits the RWTs. The generated RWTs mimic the patterns seen in the training graphs' RWTs. The optimization weaves these patterns into one coherent graph. We discuss \ourmethod's advantages below.

\medskip\noindent
{\bf Our contributions:}
%Our proposed method, named \ourmethod, has the following advantages.
Our proposed method, named \ourmethod, generates Random Walk Trajectories (RWTs) and then optimally weaves them together into a coherent graph.
%generates the graph that optimally fits the RWTs. The generated RWTs mimic the patterns seen in the training graphs' RWTs. The optimization weaves these patterns into one coherent graph. 
We discuss \ourmethod's advantages below.

% \begin{itemize}
%\txtred{Why different from embedding? There, it is a function, not an optimization?}
%This division simplifies the problem and improves the interpretability of each part.

% This provides an intuitive understanding for \ourmethod's behavior.
% For example, experiments show that \ourmethod consistently replicates large and dense communities when these are seen in the training graphs.
\begin{enumerate}
\item \smallskip\noindent  {\bf Formulation:}
We cast graph generation as a two-step problem: generate realistic patterns and then optimize a graph to fit them.
The patterns we track are derived from random walk trajectories (RWTs).
The optimized graph is then helpful for any downstream tasks that rely on random walks.
\ourmethod's separation of pattern generation from graph optimization simplifies the generative process.
To our knowledge, {\em \ourmethod is the first to demonstrate this optimization-based approach.}
%To our knowledge, \ourmethod is the first to take this optimization-based approach.

\item \smallskip\noindent  {\bf Interpretability:}
RWTs track how random walks affect vectors.
This basic process underlies many graph-theoretic problems.
For example, the vector could represent people's opinions in a social network. Then, the RWT would show how opinions evolve dynamically.
Hence, RWTs are easily interpretable.

\item \smallskip\noindent  {\bf Multi-scale structure:}
We show that RWTs can capture large-scale graph structures.
These include communities, flows, cut sizes, and degree distributions.
By varying the RWT initializations, we can also explore local structures, such as the neighborhoods of high-degree nodes.
Hence,  the set of RWTs of a graph can capture multi-scale structures.

\item \smallskip\noindent  {\bf Robustness:}
\ourmethod jointly optimizes all edges of the generated graph.
The optimization's inputs come from multiple RWTs.
Hence, the resulting graph is robust to occasional errors in the RWT generation process.

\item \smallskip\noindent  {\bf Simplicity:}
\ourmethod needs only a transformer to generate RWTs and a standard optimizer to find the best-fit graph.
Both of these are off-the-shelf tools.
Hence, \ourmethod's implementation is simple and reliable.

\item \smallskip\noindent {\bf Strong experimental results:}
On four simulated and five real-world datasets, \ourmethod outperforms state-of-the-art methods.
\ourmethod is particularly strong in matching large-scale graph structures like PageRank, cuts, communities, degree distributions, and flows.
Furthermore, \ourmethod is $10x$ faster than its closest competitor.
\end{enumerate}
\section{Proposed Method}
\label{sec:prop}

We are given a set $\mathcal{G}$ of undirected graphs, possibly of different sizes.
We want to generate new graphs that ``look like'' the graphs in $\mathcal{G}$.
For example, if $\mathcal{G}$ contains stochastic block model graphs, the generated graphs should match that family.

To generate such graphs, we must identify patterns from the graphs in $\mathcal{G}$.
Now, the space of all possible patterns is too large.
So, we must choose a subset of intuitive and widely applicable patterns.
We focus on random walk patterns since random walks underpin many graph applications.
Specifically, \ourmethod constructs random walk trajectories, as defined below.

\begin{definition}[Smoothed Random Walk Trajectory (RWT)]
\label{def:rwt}
An RWT has four parameters: (a) an adjacency matrix  $A\in\{0,1\}^{n\times n}$  of an undirected graph on $n$ nodes, (b) a function $f:\mathbb{R}_+\to\mathbb{R}_+$, (c) a smoothing parameter $\alpha\in(0,1)$, and (d) the number of steps $k$.
Let $d_i$ denote the degree of node $i$, and $d'_i:=(1-\alpha)d_i + \alpha$ the node's {\em smoothed} degree.
We assume that all nodes have positive degree.
Also, define the smoothed normalized adjacency matrix $L\in\mathbb{R}^{n\times n}$ and the ``starting vector'' $\bv\in\mathbb{R}^n$ as follows:
\begin{equation}
    \begin{aligned}
 L_{ij} &:=\frac{(1-\alpha)A_{ij} + \alpha\cdot\mathds{1}_{i=j}}{\sqrt{d'_i\cdot d'_j}} 
&  \bv_i &:=n\frac{f(d_i)}{\sum_j f(d_j)}.
\end{aligned}
\end{equation}
Then, the $k$-step Smoothed Random Walk Trajectory $RWT(A, f, \alpha, k)$ is the ordered sequence of vectors $\{\bv, L\bv, L^2\bv, \ldots, L^k\bv\}$.
\end{definition}

% Also, we define the smoothed normalized adjacency as the matrix $L$ with entries $L_{ij}:=\frac{(1-\alpha)A_{ij} + \alpha\cdot\mathds{1}_{i=j}}{\sqrt{d'_i\cdot d'_j}}$.
% Let $\bv\in\mathbb{R}_+^n$ be the ``starting'' vector with entries $\bv_i:=n\frac{f(d_i)}{\sum_j f(d_j)}$.

%\begin{remark}
% Also, one can extend $f(d_i)$ to consider aspects other than a node's degree $d_i$.
% For instance, it could consider the node's clustering coefficient, or the size of its two-hop neighborhood.
% We leave these experiments for future work.
%\end{remark}

\begin{remark}
We use the normalized adjacency $L$ in Definition~\ref{def:rwt} instead of the random walk transition matrix $D^{-1}A$ since the symmetry of $L$ simplifies later steps.
We note that both matrices have the same eigenvalues and closely related eigenvectors.
\end{remark}

We can construct several RWTs for any graph by varying the function $f(\cdot)$.
For example, if $f(d_i)$ increases with $d_i$, the relative weight of high-degree nodes in the starting vector increases.
Then, the RWT explores the neighborhood of such nodes in more detail.

The smoothing parameter $\alpha$ in Definition~\ref{def:rwt} adds ``self-loops'' to all the nodes. The presence of self-loops slows down the random walk, leading to smoother trajectories. The higher the value of $\alpha$, the smoother the trajectory. We find that smoother trajectories are easier to predict and, hence, easier to generate. Next, we show several examples of RWTs.

\begin{example}[Erdos-Renyi Graphs]
\label{ex:random}
Suppose $\mathcal{G}$ contains Erdos-Renyi random graphs with connection probability $p$. In other words, the $j^{th}$ graph has $n^{(j)}$ nodes, and each node pair is linked with probability $p$. For simplicity, we ignore smoothing ($\alpha=0$). Then, all nodes in the $j^{th}$ graph have degree $\approx n^{(j)} p$ if $n(j)$ is large enough. Hence, for smooth $f(\cdot)$, every entry of this graph's starting vector is $\approx 1$. In other words, all Erdos-Renyi graphs, irrespective of their sizes, start their RWTs close to the all-ones vector $\bone$.
Furthermore, we can show that $L^{(j)}\bone \approx \bone$ for the normalized adjacency matrices $L^{(j)}$ of such graphs.
So, the RWTs of random graphs start near $\bone$, fluctuate around that point, and eventually converge.
\end{example}

\begin{example}[Stochastic Blockmodel (SBM)]
\label{ex:sbm}
Suppose $\mathcal{G}$ contains graphs sampled from an SBM with the following parameters.
There are two communities with sizes in the ratio $\beta:1-\beta$.
The probability of an edge between two nodes is $p$ if they are from the same community and $q$ otherwise.
Suppose we choose $f(x)=1$ for all $x$. 
Then, the starting vectors $\bv^{(j)}$ equal $\bone$ for all graphs.
We can show that for large enough $k$, the random walk vector $(L^{(j)})^k\bv^{(j)}$ converges to a vector $\bw$ with clustered entries.
Specifically, let 
\begin{align*}
 \kappa_1 & :=\beta p + (1-\beta)q, 
&  \kappa_2 &:=p+q-\kappa_1,
&  \nu:=\frac{\beta\sqrt{\kappa_1} + (1-\beta)\sqrt{\kappa_2}}{\beta\kappa_1+(1-\beta)\kappa_2}.
\end{align*}
Then, $\bw_i=\sqrt{\kappa_1}/\nu$ if node $i$ belongs to the first community, and $\sqrt{\kappa_2}/\nu$ otherwise (via Theorem~\ref{thm:endvec} proved later).
Thus, the RWT evolves from $\bone$ to the vector $\bw$ with clustered entries, irrespective of the graph's size.
\end{example}

\begin{example}[Preferential Attachment]
\label{ex:pa}
Suppose $\mathcal{G}$ contains graphs created from the Barabasi-Albert model \cite{albert2002statistical}.
Then, for any graph, the distribution of node degrees follows a power-law with exponent $3$.
As in the SBM example, take $f(x)=1$ for all $x$, so the starting vectors equal $\bone$.
The ending vectors of the random walks are proportional to the square roots of the degrees (Theorem~\ref{thm:endvec} later).
These follow a power-law distribution with exponent $5$.
\end{example}
% Using the change-of-variable formula

\begin{example}[Expected-degree Random Graphs]
\label{ex:chunglu}
Suppose $\mathcal{G}$ contains random graphs whose expected degrees match those of SBMs.
Then, the starting and ending vectors will be the same as in Example~\ref{ex:sbm}, but the intermediate vectors will be different.
\end{example}

The above examples show that graphs from different families have different RWT signatures.
In all cases, the RWTs started from the all-ones vector.
But, they traced trajectories with different ending vectors.
For other choices of $f(\cdot)$, RWTs can explore (say) high-degree nodes and their neighborhoods.
\ourmethod automatically exploits such patterns to generate new graphs from the same family.

%Our approach to graph generation is to automatically identify such patterns, generate RWTs matching the identified patterns, and work backwards from the generated RWTs to the corresponding graph.

\medskip\noindent
{\bf Main Idea:}
\ourmethod has three steps.
First, we construct RWTs from the graphs in $\mathcal{G}$.
From these, we learn to reverse RWTs. In other words, given a vector from an RWT, \ourmethod learns to predict the previous vector.
Second, we use this reverse predictor to generate new RWTs. To do this, we need an ``ending'' vector. All the generated RWTs share the same ending vector but trace different trajectories. We show how to construct a realistic ending vector, and why we can think of the generated RWTs as being from the same graph. Third, from the generated RWTs, \ourmethod infers the underlying graph. Crucially, we infer all edges of this graph jointly. We can generate multiple graphs by repeating the second and third steps from different ending vectors. Next, we provide details for each of the three parts of \ourmethod.

\subsection{Learning to Reverse RWTs}
\label{sec:prop:reverse}

%Our approach is motivated by diffusion models but has important differences.
Given a set $\mathcal{G}=\{A_i\}$ of graphs and a set $\mathcal{F}=\{f_\ell(\cdot)\}$ of functions, we construct the set of all RWTs $\mathcal{R}:=\{RWT(A, f, \alpha, k)\}_{A\in \mathcal{G}, f\in\mathcal{F}}$.
Recall that each RWT is a sequence of vectors $\bv_1, \bv_2, \ldots, \bv_k$, where a pair $(\bv_j, \bv_{j+1})$ represents one step of a random walk on some graph in $\mathcal{G}$.

Next, we learn to reverse the RWTs, that is, to predict $\bv_j$ given $\bv_{j+1}$ and $f(\cdot)$.
The predictor must know $f(\cdot)$ since two different RWTs may arrive at the same $\bv_{j+1}$ via different paths.
Each path is determined by its starting vector, which depends on $f(\cdot)$.
To build the predictive model, we face two challenges.
\begin{itemize}
\item {\em Arbitrary length input/outputs:}
The length of the vectors $\bv_{j}$ and $\bv_{j+1}$ is the number of nodes in the graph.
Since the graphs in $\mathcal{G}$ can have different sizes, the lengths of vectors in $\mathcal{R}$ can vary.
\item {\em Permutation invariance:}
The model must be invariant to permutations of the components of $\bv_j$ and $\bv_{j+1}$ since a permutation is just a reordering of the nodes.
Such reordering should not affect the model's predictions.
\end{itemize}

Our solution is simple and elegant: we use a transformer.
Transformers can adapt to inputs of arbitrary context length.
In our case, the input vector is $\bv_{j+1}\in\mathbb{R}^n$, where the graph size $n$ varies between the graphs in $\mathcal{G}$.
For a transformer, this means a context of length $n$, where each item in the context is one-dimensional.
Given such an input, the transformer's output is also of length $n$, like the desired output vector $\bv_j$.
So, the same transformer can work for input/output vector pairs of all sizes. 
Also, a transformer without position embeddings or causal masking is invariant under permutations.
Thus, a vanilla transformer matches our desiderata.

% However, we can significantly improve this transformer using embeddings.
% Formally, we construct a binning function $B:\mathbb{R}_+\to[K]$ and an embedding $ \mathcal{E}:[K]\to\mathbb{R}^m$.
% The number of bins $K$ and the embedding dimension $m$ are chosen by the user.
% Higher values for $K$ and $m$ lead to more flexibility in the transformer.

% Now, we preprocess the data to use these embeddings.
% Specifically, we replace the input vector $\bv_{j+1}$ with $\bv_j \otimes \mathcal{E}(B(\bv_j))$.
% Here, $B(\bv)$ be the vector formed by applying $B(\cdot)$ to each element of $\bv$; $\mathcal{E}(\bv)$ is defined similarly.
% Thus, the transformer's input is now a length-$n$ sequence of $m$-dimensional embeddings, as is the output.
% Using a linear projection layer, we convert the output back to $\mathbb{R}^n$.
% This projected output is our prediction for $\bv_j$.
% We train the transformer to minimize the mean-squared error between the predicted and actual values of $\bv_j$.
% The embeddings and projection are learned alongside the transformer's parameters during training.

However, we can significantly improve this transformer using embeddings.
Formally, we construct a binning function $B:\mathbb{R}_+\to[K]$ and an embedding $ \mathcal{E}:[K]\to\mathbb{R}^m$.
For vectors, we apply these functions elementwise.
In other words, $B(\bv)$ is the vector formed by applying $B(\cdot)$ to each element of $\bv$; $\mathcal{E}(\bv)$ is defined similarly.
The user chooses the number of bins $K$ and the embedding dimension $m$.
Higher values for $K$ and $m$ lead to more flexibility in the transformer.

\begin{figure*}[tbp]
    \centering
    \includegraphics[width=\textwidth]{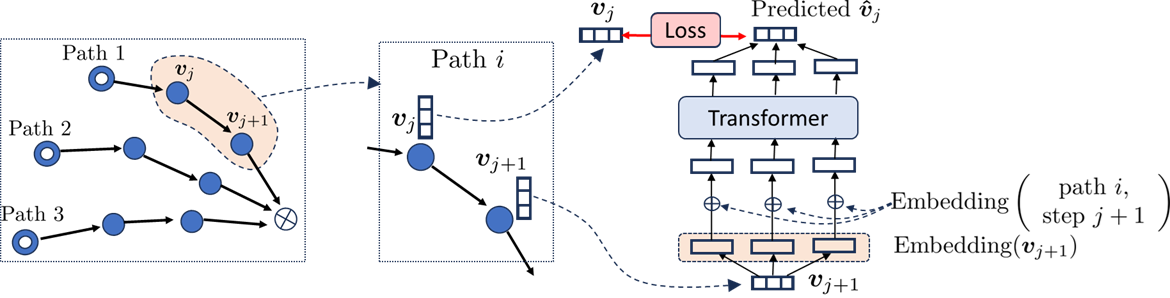}
    \caption{{\em Training the reverse predictor:} For every pair of successive vectors $(\bv_j, \bv_{j+1})$ from a path~$i$ of an RWT, we compare $\bv_j$ against a predicted vector $\hat{\bv}_j$ obtained from $\bv_{j+1}$. To create $\hat{\bv}_j$, the elements of $\bv_{j+1}$ are first converted into a sequence of embeddings. Then, we add embeddings reflecting the function $f$ used for the starting vector of path~$i$, and the step~$j+1$ within path~$i$. Finally, we transform these embeddings and project them to generate $\hat{\bv}_j$.}
    \label{fig:reverse}
\end{figure*}

Now, we preprocess the data to use these embeddings.
Specifically, in the RWTs, we replace each vector $\bv$ with $\bv \otimes \mathcal{E}(B(\bv))$.
We further augment each input vector with an embedding that encodes the choice of $f(\cdot)$ and the current step in the RWT.
Specifically, we define an embedding $\mathcal{E}'$ such that $\mathcal{E}'(f, j)\in\mathbb{R}^m$, for all functions $f\in\mathcal{F}$ and steps $j\in[k]$.
Given an input vector $\bv_{j+1}$ at step $j+1$ for function $f(\cdot)$, we define
\begin{align*}
T_{\Phi}(\bv_{j+1}, f, j+1) := \text{Transformer}_\Phi ( \bv_{j+1} &\otimes \mathcal{E}(B(\bv_{j+1}))\\
 + \bone &\otimes \mathcal{E}'(f, j+1)),
\end{align*}
where $\Phi$ represents the transformer's parameters.
The transformer's input is now a length-$n$ sequence of $m$-dimensional embeddings, as is the output.
Finally, we add a linear projection layer to convert the output back to $\mathbb{R}^n$:
\begin{align*}
P_{\Phi,\Psi}(\bv_{j+1}, f, j+1) &:= \text{Project}_\Psi\left( T_\Phi(\bv_{j+1}, f, j+1) \right) \text{ to } \mathbb{R}^n,
\end{align*}
where $\Psi$ are the projection parameters.
Given an input $\bv_{j+1}$, this projected output is our prediction for $\bv_j$.
We train the transformer to minimize the mean-squared error between the predicted and actual values of $\bv_j$.
\begin{align*}
\Phi, \Psi, \mathcal{E}, \mathcal{E}' &=
 \argmin \sum_{(\bv_j, \bv_{j+1})}\left(P_{\Phi, \Psi}(\bv_{j+1}, f, j+1) - \bv_j\right)^2.
\end{align*}
Figure~\ref{fig:reverse} illustrates this process.
After training, we have a reverse predictor $P_{\Phi,\Psi}$ (henceforth, $P$) such that
\begin{align}
P(\bv_{j+1}, f, j+1) \approx \bv_j \text{ for all } \bv_j\to\bv_{j+1} \text{ in the RWTs of $\mathcal{G}$.}
\label{eq:transformer}
\end{align}

\subsection{Generating RWTs}
\label{sec:prop:generateRWT}

Suppose we have learned an accurate reverse predictor.
Then, given an ``ending'' vector $\bar{\bv}_k$ and a choice of $f(\cdot)$, we can work backward by repeatedly predicting the previous vector: $\bar{\bv}_k\to\bar{\bv}_{k-1}\to\ldots\to\bar{\bv}_1$.
Different choices of $f(\cdot)$ yield different sequences for the same $\bar{\bv}_k$.
We call the sequence $\{\bar{\bv}_1, \ldots, \bar{\bv}_k\}$ a {\em generated} RWT.  

A generated RWT will only be realistic for special choices of $\bar{\bv}_k$.
In particular, we want $\bar{\bv}_k$ to be the ending vector for some graph from the same family as the graphs in $\mathcal{G}$.
Formally, we want $\bar{\bv}_k=\bar{L}^k \bar{\bv}_1$, where $\bar{L}$ is the smoothed normalized adjacency of such a graph.
But we do not know $\bar{\bv}_1$ or $\bar{L}$.
The following theorem shows how to select $\bar{\bv}_k$.

%\txtred{$\bar{\bv}_1$ need not match the formula of the starting vectors.}

\begin{theorem}
Consider an $n$-node graph with degrees $\{d_i\}$, smoothed degrees $\{d'_i\}$, starting vector $\bv\in\mathbb{R}_+^n$ built using a function $f(d_i)$, and smoothed normalized adjacency $L$, as in Definition~\ref{def:rwt}.
Let $\bw$ be a vector with entries
\begin{align*}
\bw_i &:=\gamma\cdot \sqrt{d'_i}\\
\text{where }\gamma &=\frac{n\left(\sum_j f(d_j)\sqrt{d'_j}\right)}{\left(\sum_j f(d_j)\right)\left(\sum_j d'_j\right)}.
\end{align*}
Then, we have:
\begin{align*}
    \lim_{k\to\infty}\|L^k \bv - \bw\| &= 0.
\end{align*}
\label{thm:endvec}
\end{theorem}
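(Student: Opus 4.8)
The plan is to recognize $L$ as the symmetric normalized adjacency matrix of the \emph{smoothed graph} — the original graph with a self-loop of weight $\alpha$ added at every node — and then read off the limit directly from its dominant eigenvector. Concretely, set $\tilde{A} := (1-\alpha)A + \alpha I$ and $D' := \mathrm{diag}(d'_1,\ldots,d'_n)$. The row sums of $\tilde{A}$ are exactly the smoothed degrees $d'_i$, and Definition~\ref{def:rwt} then says $L = D'^{-1/2}\tilde{A}D'^{-1/2}$. This symmetric matrix is similar to the lazy random-walk matrix $D'^{-1}\tilde{A}$, whose rows sum to $1$, so (as the earlier remark notes) the two share eigenvalues.

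First I would pin down the top eigenpair. Since $D'^{-1}\tilde{A}$ is row-stochastic with right eigenvector $\bone$ at eigenvalue $1$, the similar symmetric matrix $L$ has eigenvalue $1$ with eigenvector $\bm u := D'^{1/2}\bone$, i.e.\ $\bm u_i = \sqrt{d'_i}$. Normalizing gives $\hat{\bm u} := \bm u/\|\bm u\|$ with $\|\bm u\|^2 = \sum_j d'_j$.

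Second — and this is the crux — I would argue that this eigenvalue is \emph{simple} and \emph{strictly dominant}, so that every other eigenvalue satisfies $|\lambda_i| < 1$. Simplicity follows from connectivity of the graph (an implicit hypothesis, needed for a single constant $\gamma$ to describe the limit); strict dominance over the value $-1$ follows precisely because $\alpha>0$ inserts self-loops, making the smoothed walk aperiodic and excluding $-1$ from the spectrum. This spectral-gap argument is the main obstacle: without it the powers $L^k$ need not converge at all, and the clean limit $\bw$ would be replaced by a component-wise projection.

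Finally, I would close by spectral decomposition. As $L$ is symmetric it admits an orthonormal eigenbasis; writing $\bv = (\hat{\bm u}^T\bv)\,\hat{\bm u} + \bv^{\perp}$ and applying $L^k$ gives $\|L^k\bv - (\hat{\bm u}^T\bv)\hat{\bm u}\| \le (\max_{i\ge 2}|\lambda_i|)^{k}\,\|\bv^{\perp}\| \to 0$, which establishes convergence to $(\hat{\bm u}^T\bv)\hat{\bm u}$. It then remains to identify this limit with $\bw$. Computing $\hat{\bm u}^T\bv = \tfrac{1}{\sqrt{\sum_j d'_j}}\sum_i \sqrt{d'_i}\cdot n f(d_i)/\sum_j f(d_j)$ and multiplying by $\hat{\bm u}_i = \sqrt{d'_i}/\sqrt{\sum_j d'_j}$ yields exactly $\bw_i = \gamma\sqrt{d'_i}$ with the stated $\gamma = n\big(\sum_j f(d_j)\sqrt{d'_j}\big)/\big[(\sum_j f(d_j))(\sum_j d'_j)\big]$, completing the argument. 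The remaining work is routine bookkeeping of these sums.
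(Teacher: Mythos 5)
Your proposal is correct and follows essentially the same route as the paper: identify the top eigenpair of $L$ as eigenvalue $1$ with eigenvector proportional to $(\sqrt{d'_i})_i$, use connectivity plus the $\alpha>0$ self-loops to rule out any other eigenvalue of modulus $1$, and conclude by projecting $\bv$ onto the dominant eigenvector, whose coefficient works out to exactly $\gamma$. The only (minor) difference is that you obtain the spectral bounds via similarity of $L$ to the row-stochastic lazy-walk matrix $D'^{-1}\tilde{A}$ and Perron--Frobenius, whereas the paper's Lemma~\ref{lem:eig} proves $\beta_1=1$ and $\beta_n>-1$ directly through quadratic-form (positive semidefiniteness) arguments; both deliver the same facts, and like the paper you correctly flag that connectivity is an implicit hypothesis.
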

\begin{proof}
From Lemma~\ref{lem:eig} in the Appendix, the largest eigenvalue of $L$ is $1$ with eigenvector $\bm{u}_1$ having components $\bm{u}_{1;i}=\sqrt{d'_i}/(\sum_j d'_j)$.
% It is well known the largest eigenvalue of $L$ is $1$.
The matrix $L$ is irreducible (since the graph is connected) and aperiodic (since $\alpha>0$ induces self-loops).
Hence, no other eigenvalue has absolute value $1$.
% Perron-Frobenius theorem
% The corresponding eigenvector $\bm{u}_1$ has its $i^{th}$ component proportional to $\sqrt{d'_i}$, i.e., $\bm{u}_{1;i}=\sqrt{d'_i}/(\sum_j d'_j)$.
Next, we observe that $\bv^T{\bm u}_1>0$, since both $\bv$ and $\bm u_1$ are in the positive orthant.
Hence, $L^k\bv$ tends to the vector $(\bv^T{\bm u}_1) {\bm u}_1$, which is seen to be the vector $\bw$.
\end{proof}

Theorem~\ref{thm:endvec} shows that a realistic ending vector $\bar{\bv}_k$ must be close to $\bw$, and $\bw$ only depends on the degrees $\{d_i\}$.
So, to construct $\bar{\bv}_k$, we must generate a realistic degree distribution that matches the family $\mathcal{G}$.
One approach is to sample a graph $G\in\mathcal{G}$ and perturb its degree distribution.  
Another option is to fit a model to the degree distributions of the graphs in $\mathcal{G}$.
For example, we can fit a power law or a lognormal since they are widely observed in real-world data.
Then, we can sample a new degree distribution from this fitted model.
Either way, we get a realistic degree distribution $\{d_i\}$.

Now, we generate an RWT as follows.
Using $\{d_i\}$ and a choice of $f\in\mathcal{F}$, we construct the ending vector $\bar{\bv}_k$ (Theorem~\ref{thm:endvec}).
Next, we use the reverse predictor $P$ from Section~\ref{sec:prop:reverse} to predict $\bar{\bv}_{k-1}:=P(\bar{\bv}_k)$, $\bar{\bv}_{k-2}:=P(\bar{\bv}_{k-1})$, and so on.
Proceeding this way, we construct all the vectors  $\{\bar{\bv}_1, \ldots, \bar{\bv}_k\}$. 
This sequence of vectors is the {\em generated RWT}.

By repeating these steps with the same $\bar{\bv}_k$ but different $f\in\mathcal{F}$, we can generate several RWTs.
We must now construct the sparse graph corresponding to the generated RWTs.
The following section shows how.

\begin{remark}[Differences from traditional diffusion]
Like diffusion models, \ourmethod has forward and reverse processes.
However, in diffusion, the forward process adds random noise.
In \ourmethod, the forward process is deterministic.
It represents the evolution of a vector during random walks.
Now, a forward process is only useful if it converges to an easy-to-sample stationary point.
In diffusion models, this point is often the standard Gaussian distribution (``100\%-noise'').
Theorem~\ref{thm:endvec} shows that \ourmethod's forward process also converges.
But our stationary vector represents the convergence of random walks, not noise.
Since random walks are widely used in applications, learning their patterns via RWTs can be more beneficial than learning a noise process on graphs.
\end{remark}

\begin{remark}[Single ending vector in Figure~\ref{fig:overview}]
Theorem~\ref{thm:endvec} shows that the ending vector is the same for all starting vectors after normalization by the appropriate $\gamma$.
Figure~\ref{fig:overview} shows this visually.
However, Definition~\ref{def:rwt} uses unnormalized starting vectors so that we can discuss Examples~\ref{ex:sbm}-~\ref{ex:chunglu} using the same starting vectors.
\end{remark}
%\txtred{Explain Fig 1}

\subsection{Inferring the Graph}
\label{sec:prop:infer}

Given a set of generated RWTs, we want to find one graph that generates them.
Suppose $\bar{\bv}_j$ and $\bar{\bv}_{j+1}$ are successive vectors in one of the generated RWTs.
Then, the smoothed normalized adjacency matrix $L$ for this graph must satisfy $L \bar{\bv}_j = \bar{\bv}_{j+1}$.
This relation holds for every pair of successive vectors.
Formally, let $V_1$ be a matrix with rows $\{\bar{\bv}_j\}$ and $V_2$ a matrix with rows $ \{\bar{\bv}_{j+1}\}$.
Then,
\begin{align}
   V_1 L &= V_2. \label{eq:Vt}
\end{align}

From Definition~\ref{def:rwt}, the matrix $L$ is of the form  
\begin{align}
L &= (D')^{-1/2}((1-\alpha)A + \alpha I)(D')^{-1/2}, \label{eq:L}
\end{align}
where $A$ is the adjacency matrix of the desired graph, $D'$ is a diagonal matrix with entries $D'_{ii}=(1-\alpha)d_i + \alpha$ and $d_i$ is the degree of node $i$.
Note that we know the $\{d_i\}$ since we generated the degree distribution as the first step in creating RWTs (Section~\ref{sec:prop:generateRWT}).

%Inferring $A$ gives us the graph.

Plugging Eq.~\ref{eq:L} into Eq.~\ref{eq:Vt}, we find $A$ by solving:
%yields the following optimization problem to find $A$:
\begin{align}
\text{minimize}_{A} & \sum_{i,j\in[n]} |X_{ij}| \label{eq:optObjInt}\\
\text{where } & X = V_1 (D')^{-1/2}((1-\alpha)A + \alpha I)(D')^{-1/2} - V_2, \nonumber\\
 & A_{ij} \in \{0, 1\} \text{ for all } i,j\in[n], \nonumber\\
 & A=A^T,
 \text{Trace}(A)=0,
 A\bone=\bm{d},\nonumber
 % & A = A^T, \nonumber\\
 % & A\bone = \bm{d}, \nonumber
\end{align}
where $\bm{d}$ is the vector with entries $d_i$.
The constraints ensure that $A$ is an unweighted graph with degrees $\bm{d}$.
Eq.~\ref{eq:optObjInt} is an Integer Linear Program that can be solved by standard tools such as Gurobi.

\begin{remark}
An alternative to Eq.~\ref{eq:optObjInt} is to relax the requirement $A_{ij}\in\{0,1\}$ to $A_{ij}\in[0,1]$.
This results in a convex problem:
\begin{align}
\text{minimize}_{\tilde{A}} & \sum_{ij} |X_{ij}|\label{eq:optObj}\\
\text{where } & X = 
\|V_1 (D')^{-1/2}((1-\alpha)\tilde{A} + \alpha I)(D')^{-1/2} - V_2\|_F^2, \nonumber\\
 & 0 \leq \tilde{A} \leq 1, 
 \text{Trace}(A)=0,
 \tilde{A} = \tilde{A}^T, 
 \tilde{A}\bone = \bm{d}.
 \nonumber
\end{align}
This results in a {\em weighted} graph $\tilde{A}$.
We can construct $A$ by rounding the entries of $\tilde{A}$ as follows:
\begin{align}
 A_{ij} &:= \mathds{1}_{\tilde{A}_{ij} > a^\star + b^\star\log d_i}, \label{eq:objOpt2}\\
 \text{where } a^\star, b^\star &= \argmin_{a,b} \frac{1}{n} \sum_{i=1}^n \left| \frac{\sum_j \mathds{1}_{\tilde{A}_{ij} > a+b\log d_i}}{d_i}-1 \right|.\nonumber
\end{align}
The choice of $(a^\star, b^\star)$ minimizes the relative error between the node degrees of $A$ and the desired degrees $\{d_i\}$.
We can select $(a^\star, b^\star)$ by grid search over a chosen range.
While this approach offers no guarantees for the objective of Eq.~\ref{eq:optObjInt}, it often works well in practice.
\end{remark}

\subsection{Overall Algorithm}
\label{sec:prop:overall}
Algorithm~\ref{alg:ourmethod} shows the pseudocode for \ourmethod.
We first build the RWTs for the graphs in $\mathcal{G}$.
Then, we train a transformer to reverse each step of the observed RWTs.
To generate a graph, we first generate a realistic degree distribution.
The degree distribution gives us the ending vector $\bar{\bv}_k$ (Theorem~\ref{thm:endvec}).
Starting from $\bar{\bv}_k$, we generate RWTs in reverse order by repeatedly applying the transformer (Eq.~\ref{eq:transformer}).
Finally, we infer the graph corresponding to the generated RWTs by solving Equation~\ref{eq:optObjInt}.
We can generate multiple graphs by reusing the transformer with different degree distributions.

\begin{algorithm}[ht]
    \small
    \begin{algorithmic}[1]
        \Function{\ourmethod}{$\mathcal{G}, \mathcal{F}, \alpha, k$}
            \State $\mathcal{R}\gets \cup_{A\in\mathcal{G}}\cup_{f\in\mathcal{F}} RWT(A, f, \alpha, k)$
            \State $\mathcal{D} \gets \{(\bv_j, \bv_{j+1}); \bv_j\to\bv_{j+1} \text{ in some RWT in } \mathcal{R}\}$
            \State Define $B:\mathbb{R}\to[K]$ \Comment{Binning function with $K$ bins}
            %\State Fix binning function $B:\mathbb{R}_+\to[K]$ \Comment{$K$=number of bins}
            \State Define $B(\bv):=[B(\bv_1), B(\bv_2), \ldots, B(\bv_n)]^T$ for any $\bv\in\mathbb{R}^n$
            \vspace{1em}

            \LeftComment{{\bf Learn to reverse RWTs}}
            \State Define $\mathcal{E}:[K]\to\mathbb{R}^m$\Comment{value embedding function}
            \State Define $\mathcal{E}':|\mathcal{F}|\times k\to\mathbb{R}^m$
            \Comment{setting embedding function}
            \State Define $T_\Phi \gets \text{Transformer}:\mathbb{R}^{n\times m}\to\mathbb{R}^{n\times m} \text{ for any $n$}$
            \State $P_{\Phi,\Psi}(\bv, j, f) \gets \text{Project}_\Psi\left(T_\Phi(\bv\otimes \mathcal{E}(B(\bv)) + \bone\otimes \mathcal{E}'(f, j))\right)$
            % \text{ to } \mathbb{R}^n$
            %\Comment{$j\in[k]$ is position of $\bv$ in RWT}
            \State $\Phi, \Psi, \mathcal{E}, \mathcal{E}'\gets \argmin \sum_{(\bv_j, \bv_{j+1})\in\mathcal{D}}\left(P_{\Phi,\Psi}(\bv_{j+1}, f, j+1) - \bv_j\right)^2$
            %\State Train $T, P, \mathcal{E}, \mathcal{E}'$ to minimize $\sum_{(\bv_j, \bv_{j+1})\in\mathcal{D}}\left(P(\bv_{j+1}, f, j+1) - \bv_j\right)^2$
            \vspace{1em}
            
            % \State $T\gets\text{Transformer}\left(
            %     \begin{array}{rcl}
            %         \text{Input}  &=&  \bv_{j+1}\otimes \mathcal{E}(B(\bv_{j+1}))\\
            %         \text{Output} & = & \bv_j\otimes \mathcal{E}(B(\bv_j))\\
            %         \text{for all} & & (\bv_j, \bv_{j+1})\in\mathcal{D}
            %     \end{array} 
            %     \right)$
            % \State $P(\bv)\gets \text{Project } T(\bv) \text{ to } \mathbb{R}^n$\Comment{Reverse predictor}
            %\State Train $\mathcal{E}$ and projection over $\mathcal{D}$}
            %\LeftComment{Embedding $\mathcal{E}$ and projection are both learned}

            \LeftComment{{\bf Generate degree distribution}}
            \State $G\gets\text{sample graph from $\mathcal{G}$}$
            \State $\{d_i\} \gets \text{Perturbed degree distribution of $G$}$
            \State $d'_i \gets (1-\alpha)d_i+\alpha$ for all nodes $i$
            \vspace{1em}

            \LeftComment{{\bf Generate RWTs}}
            \State $V_1, V_2\gets \phi$
            \ForAll{$f\in\mathcal{F}$}
                \State $\bar{\bv}_k \gets \gamma\sqrt{\bm{d'}}$ \Comment{$\bm{d'}$ has entries $d'_i$; $\gamma$ is from Theorem~\ref{thm:endvec}}
                \State $\bar{\bv}_{k-j}\gets P_{\Phi,\Psi}(\bar{\bv}_{k-j+1}, f, k-j+1) \text{ for } j=1,2,\ldots, k-1$
                \State $V_1 \gets V_1\cup \{\bar{\bv}_{j}; j=1, \ldots, k-1\}$
                \State $V_2 \gets V_2\cup \{\bar{\bv}_{j+1}; j=1, \ldots, k-1\}$
                % \State $S\gets S \cup \{(\bar{\bv}_j, \bar{\bv}_{j+1}); j=1,\ldots,k-1 \}$
            \EndFor
            \vspace{1em}

            \LeftComment{{\bf Infer graph}}
            \State $A \gets $ solve Equation~\ref{eq:optObjInt} using $V_1$ and $V_2$
            % \State $\tilde{A} \gets$ solve Equation~\ref{eq:optObj} using $V_1$ and $V_2$
            % \State $A\gets$ threshold $\tilde{A}$ by solving Equation~\ref{eq:objOpt2}
            \State \Return unweighted graph with adjacency matrix $A$
        \EndFunction
    \end{algorithmic}
    \caption{\ourmethod}
    \label{alg:ourmethod}
\end{algorithm}

\medskip\noindent
{\bf Implementation details:}
For the binning function, we use $B(\bv):=\lfloor c(\bv-\mu)/\sigma\rfloor$, where $\mu$ and $\sigma$ are the mean and standard deviation of all vector entries in the training set, $c$ is a parameter that controls the number of bins, and the binning function is applied elementwise to $\bv$.
In our experiments, we set $c=3$, $\alpha=0.9$, and $k=10$.
For the set of functions $\mathcal{F}$, we use power laws: $\mathcal{F}=\{f:\mathbb{R}_+\to\mathbb{R}_+; f(d)=d^\beta, \beta\in\{\pm 1, \pm 2\}\}$.
We also simplify the form of $\mathcal{E}'(f, j)$ by adding an embedding of $f(\cdot)$ and and embedding of $j$.

%\txtred{We predict difference between input and output, i.e., error = $(\bv_{j+1}+P(\bv_{j+1}) - \bv_{j})^2$}

\medskip\noindent
{\bf Computational complexity:}
We first consider the cost of training.
We construct $|\mathcal{F}|\times|\mathcal{G}|$ RWTs.
For each RWT, the main cost is the $k$ sparse-matrix-vector multiplications $L\bv_j$.
This takes $O(kE)$ time, where $E$ is the maximum number of edges in any graph in $\mathcal{G}$.
Hence, creating RWTs takes $O(|\mathcal{F}||\mathcal{G}|kE)$ time.
To train the transformer, we have $k|\mathcal{F}||\mathcal{G}|$ input vectors from the RWTs.
For each vector, the attention mechanism considers $O(n^2)$ pairs, where $n$ is the maximum number of nodes.
Each pair has a cost proportional to the embedding dimension $m$.
We assume that the transformer's size is fixed (i.e., $O(1)$ layers and heads).
So, the cost of training the transformer is $O(|\mathcal{F}||\mathcal{G}|kn^2m)$, and this is also the overall cost of training.

To generate a graph, we create its RWTs via $k|\mathcal{F}|$ passes of the transformer.
Each pass takes $O(n^2m)$ time.
Since Integer Linear Programs (Eq.~\ref{eq:optObjInt}) can have variable costs, we instead analyze the convex optimization (Eq.~\ref{eq:optObj}).
To generate a graph of $n$ nodes requires $O(n^2)$ parameters.
The main cost is in computing the matrix-matrix product of $V_1$ (size $k|\mathcal{F}|\times n$ and $(D')^{-1/2}\tilde{A}(D')^{-1/2}$ (size $n\times n$) in the objective.
Assuming we run gradient descent for a fixed number of steps, the convex optimization takes $O(\text{MatMult}(k|\mathcal{F}|\times n, n\times n))$ time.
The threshold step (Eq.~\ref{eq:objOpt2}) costs $O(n^2)$ for grid search.
Hence, the total cost of generation is $O(k|\mathcal{F}|n^2m + \text{MatMult}(k|\mathcal{F}|\times n, n\times n)) = O(k|\mathcal{F}|n^2m)$.
%\txtred{If we did naive matrix multiplication, this is $O(k|\mathcal{F}|n^2(n+m))$.}

Note that the dominant costs are training a transformer, matrix multiplication, and convex optimization.
There are fast off-the-shelf libraries for all three.

\section{Experiments}
\label{sec:exp}
We ran experiments to compare the quality of graph generated by \ourmethod against state of the art competing methods.

\medskip\noindent
{\bf Comparison metrics:}
We consider measures of node centrality (degree and Pagerank), local neighborhoods (clustering coefficient and ORBIT scores), quality of random partitions (cut-size, conductance, and modularity), connectivity between random node pairs (max-flow and resistance), and overall connectivity (if the graph is connected or not). Apart from overall connectivity, each measure results in a vector $\phi(G)$ for any graph $G$  (e.g., the vector of node degrees, or the modularities of $100$ random partitions).
We then compute the relative error
\begin{align}
    \text{error}_\phi(\mathcal{G}_{gen}\mid \mathcal{G}_{test})
    &:=
    \left|
    \frac{\sum_{G_i\in\mathcal{G}_{gen}, G_j\in\mathcal{G}_{test}}\text{distance}(\phi(G_i), \phi(G_j))}
    {\sum_{G_i,G_j\in\mathcal{G}_{test}}\text{distance}(\phi(G_i), \phi(G_j))}
    \times
    \frac{|\mathcal{G}_{test}|}{|\mathcal{G}_{gen}|}
    -1\right|,
    \label{eq:quality}
\end{align}
where $\mathcal{G}_{gen}$ is the set of generated graphs, $\mathcal{G}_{test}$ the set of test graphs from the same family as the training data, and the distance function is the Wasserstein metric between any two vectors $\phi(G_i)$ and $\phi(G_j)$.
If the generated graphs fit the test distribution, the error is close to $0$.

\medskip\noindent
{\bf Competing methods:}
We compare \ourmethod against several state of the art methods:
DiGress~\cite{vignac2022digress}, 
GSDM~\cite{luo2023fast},
GRASP~\cite{minello2024graph}
GDSS~\cite{pmlr-v162-jo22a},
and GraphRNN~\cite{you2018graphrnn}.
Apart from GraphRNN, which is an autoregressive model, all the others use diffusion.
These methods are recent, and have been shown to outperform older methods.
Hence, we compare \ourmethod against these methods.
%in our experiments.

%While GraphRNN is an autoregressive method, and the others are based on diffusion based, we find that these methods have already compared against older graph generative models. Given this, it is reasonable to limit the scope of comparison to these specific models.
%\txtred{Please check this line}

\medskip\noindent
{\bf Simulated datasets:}
We consider four types of simulated graphs: 
(a) a {\em stochastic blockmodel} with $3$ communities containing $50\%$, $30\%$, and $20\%$ of the nodes, and a connection probability of $0.8$ for nodes in the same community and $0.3$ otherwise, 
(b) a {\em Watts-Strogatz model} with $4$ edges per node and a rewiring probability of $0.3$, 
(c) a Barabasi-Albert {\em preferential attachment model}, 
%(d) a set of planar graphs, 
and (d) a {\em expected-degree random graph model}, whose degrees are the same as the Stochastic Blockmodel.

\begin{table*}[t]
    \centering
    \scriptsize
    \begin{NiceTabular}{c@{\hspace{1em}}|c|cccccccccc}
    \RowStyle[cell-space-limits=3pt,bold]{\rotate}
    & 
    & \multicolumn{1}{c}{Degree}
    & \multicolumn{1}{c}{Pagerank}
    & \multicolumn{1}{c}{\parbox{5em}{Connected Graphs?}}
    & \multicolumn{1}{c}{Cut sizes}
    & \multicolumn{1}{c}{Conductance}
    & \multicolumn{1}{c}{Modularity}
    & \multicolumn{1}{c}{\parbox{5em}{Clustering Coefficient}}
    & \multicolumn{1}{c}{ORBIT}
    & \multicolumn{1}{c}{Max Flow}
    & \multicolumn{1}{c}{Resistance}
    
    \\\hline

\Block[c]{6-1}{\rotatebox{90}{\parbox{10em}{\centering Stochastic\\ Blockmodel}}}
& DiGress & $0.10$ & $0.42$ & $\checkmark$ & $3.15$ & $0.14$ & $0.11$ & $\mathbf{1.16}$ & $2.90$ & $1.50$ & $1.74$  \\
& GSDM & $18.14$ & $11.77$ & $\times$ & $35.78$ & $8.67$ & $24.15$ & $31.13$ & $19.65$ & $19.29$ & $791.98$  \\
& GDSS & $2.66$ & $0.34$ & $\checkmark$ & $23.79$ & $0.53$ & $4.13$ & $22.14$ & $16.54$ & $11.79$ & $28.65$  \\
& GRASP & $2.09$ & $12.45$ & $\checkmark$ & $16.01$ & $2.61$ & $1.45$ & $10.69$ & $10.89$ & $12.47$ & $39.99$  \\
& GraphRNN & $37.15$ & $5.71$ & $\times$ & $60.62$ & $8.56$ & $31.88$ & $18.54$ & $31.05$ & $32.63$ & $2170.56$  \\
\cline{2-12}
& {\bf GraphWeave} & $\mathbf{0.02}$ & $\mathbf{0.02}$ & $\checkmark$ & $\mathbf{0.02}$ & $\mathbf{0.03}$ & $\mathbf{0.10}$ & $4.27$ & $\mathbf{1.35}$ & $\mathbf{0.01}$ & $\mathbf{0.02}$  \\
\hline

\Block[c]{6-1}{\rotatebox{90}{\parbox{10em}{\centering Watts\\ Strogatz}}}
& DiGress & $0.06$ & $0.47$ & $\checkmark$ & $6.17$ & $0.14$ & $\mathbf{0.18}$ & $\mathbf{2.07}$ & $3.99$ & $0.46$ & $1.94$  \\
& GSDM & $2.72$ & $2.84$ & $\checkmark$ & $2281.93$ & $3.70$ & $8.41$ & $5.41$ & $9537.24$ & $313.18$ & $18.94$  \\
& GDSS & $2.50$ & $1.33$ & $\checkmark$ & $1347.60$ & $3.33$ & $7.22$ & $2.69$ & $3681.39$ & $176.23$ & $18.26$  \\
& GRASP & $2.98$ & $2.07$ & $\times$ & $3294.84$ & $3.50$ & $9.57$ & $11.85$ & $13067.07$ & $393.88$ & $19.05$  \\
& GraphRNN & $0.71$ & $5.13$ & $\times$ & $138.97$ & $5.41$ & $7.16$ & $2.67$ & $9.78$ & $25.58$ & $53.64$  \\
\cline{2-12}
& {\bf GraphWeave} & $\mathbf{0.02}$ & $\mathbf{0.18}$ & $\checkmark$ & $\mathbf{0.22}$ & $\mathbf{0.11}$ & $0.22$ & $2.90$ & $\mathbf{3.65}$ & $\mathbf{0.02}$ & $\mathbf{1.77}$  \\
\hline

\Block[c]{6-1}{\rotatebox{90}{\parbox{10em}{\centering Preferential\\ Attachment}}}
& DiGress & $0.09$ & $\mathbf{0.01}$ & $\checkmark$ & $1.80$ & $\mathbf{0.08}$ & $0.06$ & $\mathbf{0.04}$ & $\mathbf{0.02}$ & $0.30$ & $\mathbf{0.27}$  \\
& GSDM & $0.88$ & $4.78$ & $\checkmark$ & $1637.97$ & $3.90$ & $6.60$ & $9.48$ & $549.15$ & $225.39$ & $35.19$  \\
& GDSS & $0.97$ & $4.29$ & $\checkmark$ & $891.67$ & $3.57$ & $5.27$ & $4.69$ & $192.20$ & $121.26$ & $33.01$  \\
& GRASP & $2.73$ & $2.32$ & $\times$ & $48.04$ & $0.30$ & $0.86$ & $6.37$ & $3.65$ & $5.13$ & $32.46$  \\
& GraphRNN & $6.91$ & $5.94$ & $\times$ & $180.42$ & $5.53$ & $12.39$ & $2.69$ & $5.50$ & $18.81$ & $184.25$  \\
\cline{2-12}
& {\bf GraphWeave} & $\mathbf{0.01}$ & $0.21$ & $\checkmark$ & $\mathbf{0.01}$ & $0.09$ & $\mathbf{0.01}$ & $0.85$ & $0.27$ & $\mathbf{0.00}$ & $1.50$  \\
\hline

\Block[c]{6-1}{\rotatebox{90}{\parbox{10em}{\centering Random\\ Graph\\ with degrees\\ like SBM}}}
& DiGress & $0.05$ & $\mathbf{0.01}$ & $\checkmark$ & $\mathbf{0.01}$ & $0.14$ & $\mathbf{0.02}$ & $\mathbf{0.01}$ & $\mathbf{0.00}$ & $0.03$ & $\mathbf{0.03}$  \\
& GSDM & $4.92$ & $1.94$ & $\checkmark$ & $18.59$ & $0.47$ & $3.50$ & $19.43$ & $15.14$ & $8.34$ & $25.31$  \\
& GDSS & $2.80$ & $0.38$ & $\checkmark$ & $21.13$ & $0.44$ & $3.52$ & $20.12$ & $17.86$ & $10.22$ & $25.20$  \\
& GRASP & $1.52$ & $10.67$ & $\checkmark$ & $13.39$ & $2.07$ & $1.15$ & $10.23$ & $9.86$ & $10.33$ & $26.09$  \\
& GraphRNN & $37.85$ & $4.35$ & $\times$ & $54.74$ & $8.17$ & $29.26$ & $19.29$ & $32.32$ & $29.14$ & $1980.06$  \\
\cline{2-12}
& {\bf GraphWeave} & $\mathbf{0.02}$ & $0.03$ & $\checkmark$ & $0.03$ & $\mathbf{0.05}$ & $0.34$ & $10.17$ & $4.54$ & $\mathbf{0.02}$ & $0.06$  \\
\hline

    \end{NiceTabular}
    \caption{{\em Comparison on simulated datasets:} The quality of the generated graphs is measured via Eq.~\ref{eq:quality} (lower is better).
    % For each metric, we compare the Wasserstein distances between the generated and ground truth graphs relative to the differences between the ground truth graphs.
    % We report the relative difference between the two ($0$ implies no difference; lower numbers are better).
    \ourmethod outperforms other methods, especially for the large-scale metrics like degree distributions, cut sizes, conductance, and max-flow.
    Also, \ourmethod and GDSS are the only methods that always generate connected graphs.    
    }
    \label{tbl:sim}
    \vspace{-1em}
\end{table*}

\medskip\noindent
{\bf Real-world datasets:}
We also tested our method on five real-world benchmark datasets.
These include
(a) {\em Cora}
(b) {\em Citeseer}, and  
(c) {\em Pubmed}, where the nodes represent
documents and edges represent citation relationships from which we extract 3-hop ego networks~\cite{sen2008collective}.
We also use
(d) {\em Proteins}, containing molecular graphs with $100$ to $500$ nodes in each graph~\cite{dobson2003distinguishing},
and
%(b) {\em Community-small}, having graphs of $12$ to $20$ nodes belonging to two communities, and 
(e) {\em QM9}, comprising stable organic molecules with up to nine heavy atoms~\cite{wu2018moleculenet}.

\medskip\noindent
{\bf Experimental settings:}
For each dataset and each method, we train on $100$ graphs and then generate (at least) $40$ graphs.
We compute various comparison metrics for each of the generated graphs, and compare them against unseen test graphs from the same dataset using Eq.~\ref{eq:quality}.
%The only exception is {\em Community-small}, for which we used $50$ training graphs due to the small size of the dataset.

%\input{tbl_real}
\begin{table*}[t]
    %\vspace{-1em}
    \centering
    \scriptsize
    \begin{NiceTabular}{c@{\hspace{1em}}|c|cccccccccc}
    \RowStyle[cell-space-limits=3pt,bold]{\rotate}
    & 
    & \multicolumn{1}{c}{Degree}
    & \multicolumn{1}{c}{Pagerank}
    & \multicolumn{1}{c}{Cut sizes}
    & \multicolumn{1}{c}{Conductance}
    & \multicolumn{1}{c}{Modularity}
    & \multicolumn{1}{c}{\parbox{5em}{Clustering Coefficient}}
    & \multicolumn{1}{c}{ORBIT}
    & \multicolumn{1}{c}{Max Flow}
    & \multicolumn{1}{c}{Resistance}
    
    \\\hline

\Block[c]{6-1}{\rotatebox{90}{\parbox{10em}{\centering Cora}}}
& DiGress & $1.73$ & $0.10$ & $0.99$ & $0.12$ & $0.09$ & $1.06$ & $0.25$ & $1.63$ & $\mathbf{0.05}$  \\
& GSDM & $0.34$ & $0.13$ & $7.53$ & $0.20$ & $0.38$ & $1.40$ & $5.37$ & $41.11$ & $2.13$  \\
& GDSS & $\mathbf{0.01}$ & $0.11$ & $32.23$ & $0.35$ & $0.60$ & $1.70$ & $95.62$ & $117.72$ & $2.26$  \\
& GRASP & $0.03$ & $5.30$ & $0.71$ & $0.85$ & $1.27$ & $1.14$ & $0.47$ & $7.15$ & $0.77$  \\
& GraphRNN & $13.13$ & $\mathbf{0.02}$ & $0.20$ & $0.44$ & $\mathbf{0.07}$ & $1.08$ & $0.30$ & $3.26$ & $7.18$  \\
\cline{2-11}
& {\bf GraphWeave} & $\mathbf{0.01}$ & $\mathbf{0.02}$ & $\mathbf{0.18}$ & $\mathbf{0.11}$ & $\mathbf{0.07}$ & $\mathbf{0.67}$ & $\mathbf{0.16}$ & $\mathbf{0.18}$ & $0.69$  \\
\hline

\Block[c]{6-1}{\rotatebox{90}{\parbox{10em}{\centering Pubmed}}}
& DiGress & $10.25$ & $\mathbf{0.04}$ & $0.36$ & $0.07$ & $0.45$ & $0.43$ & $0.45$ & $1.06$ & $0.18$  \\
& GSDM & $0.37$ & $0.16$ & $4.04$ & $\mathbf{0.02}$ & $0.06$ & $1.02$ & $4.07$ & $15.79$ & $1.42$  \\
& GDSS & $\mathbf{0.00}$ & $0.28$ & $14.71$ & $0.05$ & $0.20$ & $1.65$ & $80.79$ & $51.05$ & $1.57$  \\
& GRASP & $0.02$ & $17.31$ & $0.21$ & $2.65$ & $3.45$ & $1.45$ & $0.98$ & $0.67$ & $0.21$  \\
& GraphRNN & $11.50$ & $0.15$ & $0.26$ & $0.13$ & $\mathbf{0.04}$ & $0.38$ & $0.20$ & $1.07$ & $6.83$  \\
\cline{2-11}
& {\bf GraphWeave} & $0.07$ & $0.07$ & $\mathbf{0.03}$ & $0.11$ & $0.07$ & $\mathbf{0.26}$ & $\mathbf{0.03}$ & $\mathbf{0.06}$ & $\mathbf{0.01}$  \\
\hline

\Block[c]{6-1}{\rotatebox{90}{\parbox{10em}{\centering Citeseer}}}
& DiGress & $8.87$ & $\mathbf{0.04}$ & $0.60$ & $\mathbf{0.02}$ & $0.08$ & $1.19$ & $0.24$ & $0.99$ & $0.64$  \\
& GSDM & $0.21$ & $0.13$ & $3.71$ & $\mathbf{0.02}$ & $0.10$ & $1.49$ & $8.30$ & $20.45$ & $1.41$  \\
& GDSS & $0.11$ & $0.13$ & $11.42$ & $0.09$ & $0.27$ & $2.02$ & $90.31$ & $54.23$ & $1.54$  \\
& GRASP & $\mathbf{0.08}$ & $0.79$ & $2.45$ & $0.23$ & $0.32$ & $1.87$ & $16.37$ & $13.13$ & $0.75$  \\
& GraphRNN & $14.09$ & $0.10$ & $0.24$ & $0.17$ & $\mathbf{0.00}$ & $1.32$ & $\mathbf{0.05}$ & $1.91$ & $3.95$  \\
\cline{2-11}
& {\bf GraphWeave} & $0.17$ & $0.07$ & $\mathbf{0.03}$ & $0.13$ & $0.08$ & $\mathbf{0.48}$ & $0.09$ & $\mathbf{0.23}$ & $\mathbf{0.23}$  \\
\hline

\Block[c]{6-1}{\rotatebox{90}{\parbox{10em}{\centering QM9}}}
& DiGress & $0.06$ & $0.01$ & $0.11$ & $\mathbf{0.27}$ & $\mathbf{0.23}$ & $0.35$ & $0.08$ & $\mathbf{0.14}$ & $0.04$  \\
& GSDM & $0.25$ & $0.03$ & $2.60$ & $0.98$ & $0.76$ & $1.76$ & $4.09$ & $3.19$ & $0.91$  \\
& GDSS & $0.09$ & $0.20$ & $0.85$ & $0.61$ & $0.48$ & $0.94$ & $1.22$ & $1.01$ & $0.30$  \\
& GRASP & $\mathbf{0.01}$ & $0.11$ & $0.06$ & $0.54$ & $0.39$ & $\mathbf{0.03}$ & $0.12$ & $0.19$ & $\mathbf{0.03}$  \\
& GraphRNN & $\mathbf{0.01}$ & $0.09$ & $0.22$ & $0.63$ & $0.38$ & $0.64$ & $0.39$ & $0.29$ & $\mathbf{0.03}$  \\
\cline{2-11}
& {\bf GraphWeave} & $0.03$ & $\mathbf{0.00}$ & $\mathbf{0.02}$ & $0.39$ & $0.31$ & $0.47$ & $\mathbf{0.06}$ & $0.33$ & $0.07$  \\
\hline

\Block[c]{6-1}{\rotatebox{90}{\parbox{10em}{\centering Proteins}}}
& DiGress & $5.94$ & $0.33$ & $1.03$ & $1.87$ & $3.41$ & $7.19$ & $4.64$ & $4.17$ & $0.61$  \\
& GSDM & $0.74$ & $\mathbf{0.00}$ & $10.20$ & $0.10$ & $0.75$ & $3.74$ & $2254.53$ & $35.68$ & $1.63$  \\
& GDSS & $0.78$ & $0.01$ & $36.58$ & $0.19$ & $1.11$ & $\mathbf{2.95}$ & $22210.10$ & $96.64$ & $1.69$  \\
& GRASP & $0.89$ & $31.57$ & $2.51$ & $14.43$ & $8.98$ & $5.40$ & $1010.18$ & $10.48$ & $\mathbf{0.60}$  \\
& GraphRNN & $2.30$ & $0.16$ & $0.24$ & $0.19$ & $\mathbf{0.06}$ & $4.88$ & $\mathbf{1.51}$ & $4.01$ & $12.04$  \\
\cline{2-11}
& {\bf GraphWeave} & $\mathbf{0.01}$ & $0.03$ & $\mathbf{0.00}$ & $\mathbf{0.04}$ & $\mathbf{0.06}$ & $5.62$ & $2.33$ & $\mathbf{0.66}$ & $3.25$  \\
\hline

    \end{NiceTabular}
    \caption{{\em Comparison on real-world datasets (lower is better).} 
    \ourmethod is best, or close to best, for 
    most measures and datasets.
    %degree, Pagerank, and cut-size distributions on all datasets.
    }
    \label{tbl:real}
    \vspace{-1em}
\end{table*}

\begin{table*}[htbp]
    \centering
    \footnotesize
    
    \begin{NiceTabular}{c||c|c|c||c|c|c}
    
    \multicolumn{1}{c||}{\bf Improvement of}
    & \multicolumn{3}{c||}{\em Stochastic Blockmodel}
    & \multicolumn{3}{c}{\em Preferential Attachment}\\

    {\bf {\em Integer} over}    
    & \multicolumn{1}{c|}{$\mathbf{50}$ \bf nodes}
    & \multicolumn{1}{c|}{$\mathbf{100}$ \bf nodes}
    & \multicolumn{1}{c|}{$\mathbf{200}$ \bf nodes}
    & \multicolumn{1}{c|}{$\mathbf{50}$ \bf nodes}
    & \multicolumn{1}{c|}{$\mathbf{100}$ \bf nodes}
    & \multicolumn{1}{c}{$\mathbf{200}$ \bf nodes}
    \\    \hline
    {\bf \em Convex}
    & $31\%$ & $57\%$ & $18\%$
    & $58\%$ & $47\%$ & $21\%$\\
    {\bf \em Random}
    & $80\%$ & $91\%$ & $82\%$
    & $79\%$ & $67\%$ & $55\%$
    \end{NiceTabular}
    \caption{{\em Fidelity: of RWT reconstruction:}
    The graph generated by the Integer Linear Program (Eq.~\ref{eq:optObjInt}) is significantly better than the alternatives.}
    \label{tbl:rwtfidelity}
    \vspace{-1em}
\end{table*}

\medskip\noindent
{\bf Quality of graph generation:}
Table~\ref{tbl:sim} compares all competing methods for the simulated datasets.
We find that {\em \ourmethod generally outperforms the competing methods in measures of large-scale graph structures.}
For example, \ourmethod excels are predicting node degrees and cut sizes.
\ourmethod is also the best or close to the best for other metrics such as modularity, max-flow, and resistance.
The closest competing method is DiGress, but DiGress sometimes generates disconnected graphs.
In contrast, \ourmethod always generates connected graphs. 
Also, \ourmethod is significantly faster than DiGress, as we show later.

\ourmethod does particularly well for the Stochastic Blockmodel family of graphs.
This is because such graphs show large-scale community structure, and random walks can pick up such structure.

Table~\ref{tbl:real} compares the quality of all competing methods on the real-world datasets.
The results mirror those for the simulated datasets.
For large-scale measures such as the distribution of cut sizes, \ourmethod is the best on all datasets.
Furthermore, it is either the best or close to the best for degree distributions, Pagerank centrality distributions, conductance, and modularity.

\medskip\noindent
{\bf Effect of optimization:}
We compared our two optimization approaches: the Integer Linear Program of Eq.~\ref{eq:optObjInt} ({\em Integer}), and the convex relaxation with rounding of Eqs.~\ref{eq:optObj} and~\ref{eq:objOpt2} ({\em Convex}).
We also considered a baseline ({\em Random}) that picks a random graph with the same degrees as {\em Integer}.
The comparison metric is the objective function of Equations~\ref{eq:optObjInt} and~\ref{eq:optObj}, which measures how closely the generated graph matches the desired RWTs.

Table~\ref{tbl:rwtfidelity} shows that {\em Integer} is between $20\%-55\%$ better than {\em Convex}, and both are significantly better than {\em Random}.
The difference between {\em Integer} and {\em Convex} is because the latter needs to threshold edges from $[0,1]$ to $\{0,1\}$.
This thresholding step (Eq.~\ref{eq:objOpt2}) can increase the error in RWT reconstruction.

\medskip
\noindent
{\bf Wall-clock time:}
Figure~\ref{fig:time} compares the wall-clock times for the various methods.
We see that \ourmethod has the fastest training time, and has reasonable generation time.
Furthermore, {\em \ourmethod is $10x$ faster than its closest competitor (DiGress).}

\begin{figure}[h]
    %\vspace{-2em}
    \centering
    \begin{subfigure}{0.3\textwidth}
        \includegraphics[width=\textwidth]{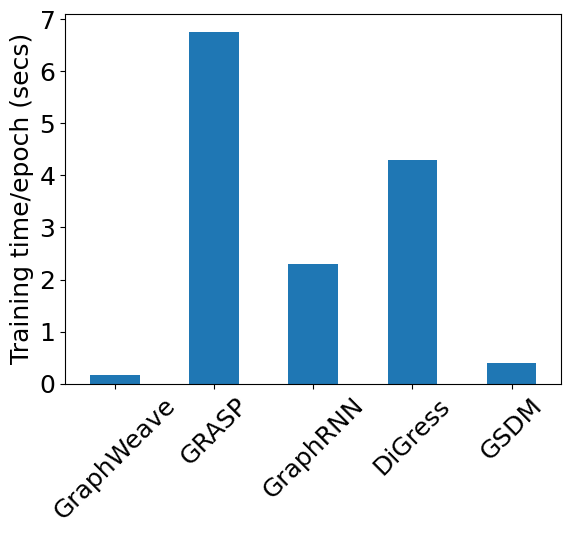}
        \caption{Training time}
    \end{subfigure}
    \begin{subfigure}{0.3\textwidth}
        \includegraphics[width=\textwidth]{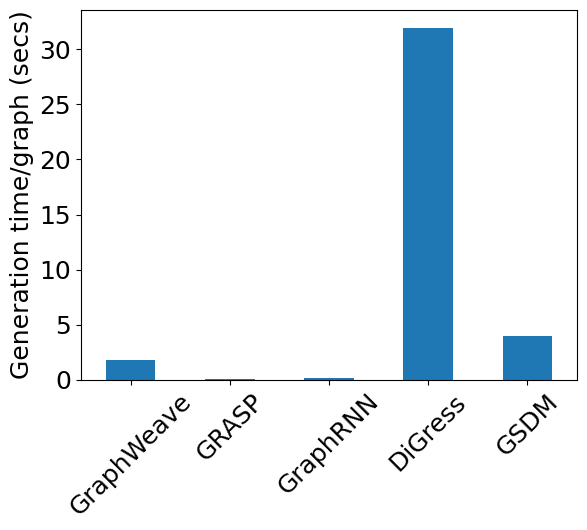}
        \caption{Generation time}
    \end{subfigure}%
    \caption{{\em Wall-clock times:} GDSS is much slower, and is not shown.}
    \label{fig:time}
    \vspace{-1em}
\end{figure}

%\subsection{Sensitivity Analysis}
\medskip \noindent
{\bf Sensitivity Analysis:}
We investigate the sensitivity of our measures to variations in the hyperparameters $c$, $\alpha$, and $k$.
Recall that $c$ controls the number of bins in the binning function $B(\bv)$, $\alpha$ smoothes the RWTs, and $k$ is the length of an RWT.
All the previous experiments used the baseline setting of $(c=3, k=10, \alpha=0.9)$.
We ran experiments varying one hyperparameter at a time.
We report all metrics normalized relative to their values in the baseline setting.
Hence, a normalized value greater than~$1$ implies worse performance than the baseline, and lower than~$1$ implies better performance.

%, and examine the relative change as each parameter is varied independently while keeping the others fixed. 

Figure~\ref{fig:sensitivity} summarizes these results. In each plot, the horizontal red line at $1$ indicates the baseline level. Deviations from this line indicate how strongly a given metric is affected by changing the corresponding parameter.
Overall, no hyperparameter setting dominates the baseline setting.
We also observe that:
\begin{itemize}
    \item {\bf Varying $\bm c$} mainly affects \textit{cut sizes} and \textit{resistance}. Higher the value of~$c$, better the performance for {\em cut sizes}.
    %show some sensitivity at the extremes of $c$.
    \item {\bf Varying $\bm \alpha$} has a more pronounced effect, particularly at $\alpha=0.99$, where several metrics (e.g., \textit{pagerank}, \textit{resistance}) exhibit large deviations from baseline.
    Thus, too much smoothing can negatively affect \ourmethod's performance.
    \item {\bf Varying $\bm k$} impacts \textit{cut sizes} and \textit{resistance} more than other metrics. This is similar to the effect of varying $c$.
    %, while most other metrics remain relatively stable.
\end{itemize}

\begin{figure}[ht]
    \centering
    \includegraphics[width=\linewidth]{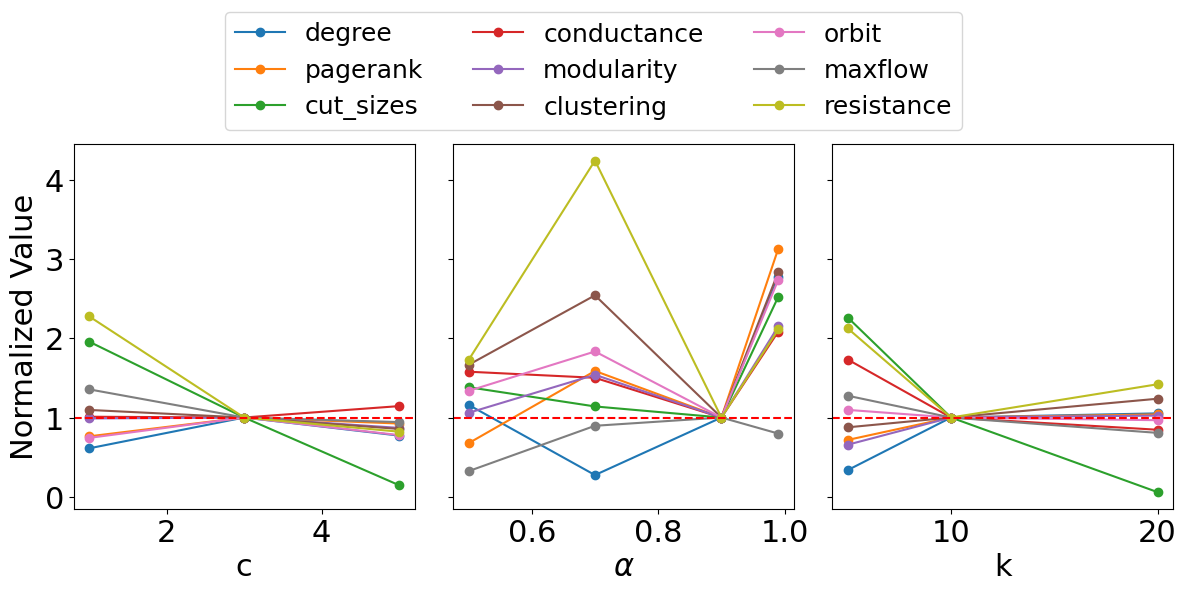}
    \caption{Sensitivity of normalized graph metrics to hyperparameter variations. Each metric is normalized to $1$ at the baseline $(c=3, k=10, \alpha=0.9)$, and deviations from $1$ indicate sensitivity to the corresponding parameter.}
    \label{fig:sensitivity}
\end{figure}

\section{Related Work}
\label{sec:related}
Graphs can be generated by autoregressive models, normalizing flow-based models, VAEs, GANs, and diffusion-based methods.
We discuss these below.

%\smallskip
\smallskip\noindent
{\bf Autoregressive models:}
These generate graphs sequentially by adding one node or edge at a time.
Each step considers the previously generated structure.
The underlying method can be a recurrent neural network like GraphRNN~\cite{you2018graphrnn}), or attention mechanisms like GRAN~\cite{liao2019efficient}, or a combination with diffusion like GraphARM~\cite{pmlr-v202-kong23b}.
However, autoregressive models are often sensitive to node orderings, and node permutations can lead to divergent generation paths.

%The fundamental approaches to graph generation can be broadly categorized into several classes. \txtred{Autoregressive models} generate graphs sequentially by adding one node or edge at a time, considering previously generated structures to inform subsequent steps. GraphRNN (You et al., 2018) \cite{you2018graphrnn}, employs a recurrent neural network to model adjacency matrices. However, GraphRNN is computationally expensive and struggles with long-range dependencies, limiting its usability to large graphs.  GRAN (Liao et al., 2019) \cite{liao2019efficient}, enhances efficiency by incorporating attention mechanisms, and this seems to improve the learning of structural dependencies while reducing computational complexity and also addresses the issue measureing likelihoods when comparing autoregressive graph generative models. More recently, Kong et al. \cite{pmlr-v202-kong23b} introduced an autoregressive diffusion model GraphARM that employs a node-absorbing diffusion process combined with a learned ordering network to optimize node addition sequences. However, autoregressive models remain sensitive to node orderings, as permutations of the same graph can lead to divergent generation paths. 

%\smallskip
\smallskip\noindent
{\bf Normalizing flows:}
These methods provide a reversible transformation between graphs and a latent distribution, enabling easy likelihood computation.
GraphNVP~\cite{madhawa2019graphnvp} uses flows for molecule generation.
GraphAF~\cite{shi2020graphaf} introduces improvements to improve the quality and validity of the generated graphs.
%\txtred{Normalizing flow-based} approaches provide a reversible transformation between graphs and a latent distribution, enabling easy likelihood computation which seemed to be a problem of autoregressive models. GraphNVP, proposed by Madhawa et al. \cite{madhawa2019graphnvp}, applies normalizing flows to molecular graph generation, allowing for efficient and reversible sampling. GraphAF, introduced by Shi et al. \cite{shi2020graphaf}, integrates autoregressive flow-based learning, improving the quality and validity of generated graphs. However, normalizing flow models again seem to be computationally expensive and require complex architectures to model high-dimensional graph distributions effectively.

\smallskip\noindent
{\bf VAEs and GANs:}
A VAE maps a graph to a latent space, and can reconstructs the graph from a latent embedding via probabilistic decoders~\cite{jin2018junction}.
GANs have also been applied to graph generation~\cite{bojchevski2018netgan}.
SPECTRE~\cite{martinkus2022spectre} integrates spectral features to enhance the GAN's expressivity.
MolGAN~\cite{de2018molgan} extends GANs for molecular graph generation by incorporating reinforcement learning.
However, many GAN-based models suffer from training instability and mode collapse, making them less reliable for diverse graph distributions.
Also, the black-box nature of adversarial training makes them hard to interpret~\cite{guo2022systematic}.

%\txtred{Variational autoencoders (VAEs)} graphs to a latent space and then reconstruct them via probabilistic decoders. Junction Tree VAE (JT-VAE) \cite{jin2018junction} considers a two-stage process, first by generating a tree of molecular substructures and then combining them into a full graph. While GANs excel at generating diverse graphs, they suffer from mode collapse and training instability, particularly for large graphs \cite{goodfellow2017nips2016tutorialgenerative}.  NetGAN \cite{bojchevski2018netgan} addresses this issue by learning from random walks, but its reliance on walk-based approximations limits its ability to capture global graph metrics like clustering coefficients. Recent works such as SPECTRE \cite{martinkus2022spectre} integrates spectral features to enhance GANs expressivity, yet the black-box nature of adversarial training makes this non-interpretable \cite{guo2022systematic}.

%\txtred{Generative adversarial networks (GANs)} have also been adapted for graph generation, where a generator synthesizes graph structures while a discriminator evaluates their realism. MolGAN, \cite{de2018molgan}, extends GANs for molecular graph generation by incorporating reinforcement learning to improve chemical validity. Despite producing high-quality graphs, MolGAN and other GAN-based models suffer from training instability and mode collapse, making them less reliable for diverse graph distributions.

\smallskip\noindent
{\bf Diffusion models:}
Buoyed by the success of diffusion for image generation, these methods have come to the fore recently.
GDSS~\cite{pmlr-v162-jo22a} leverages a system of stochastic differential equations to jointly learn node and edge distributions.
DiGress~\cite{vignac2022digress} introduces a discrete diffusion model that edits node and edge attributes through Markov transitions.
GSDM~\cite{luo2023fast} applies diffusion on the spectrum of the adjacency matrix, while GRASP \cite{minello2024graph} focuses on the spectrum of the Laplacian.
%, generating both eigenvalues and eigenvectors for adjacency matrix reconstruction. 
While these methods are the state of the art, we show that \ourmethod outperforms them, particularly for large-scale structures like the distribution of cut sizes of random partitions and node Pagerank distributions.
Furthermore, \ourmethod is faster than its closest competitors.

%\txtred{Diffusion-based models} have recently gained significant attention due to their ability to model fine-grained dependencies in graph structures. Score-based generative models such as GDSS \cite{pmlr-v162-jo22a} leverages a system of stochastic differential equations to jointly learn node and edge distributions.  DiGress \cite{vignac2022digress} introduces a discrete diffusion model that edits node and edge attributes through Markov transitions which preserves sparsity for permutation-invariant graph generation. A more recent approach \cite{luo2023fast},  utilizes the spectral properties of graphs, particularly their Laplacian eigenvalues, to guide the diffusion process and capture key structural patterns. However, diffusion-based models require numerous iterative steps during sampling thereby extensive computational resources and careful tuning of parameters so it is not practical to use for large-scale applications. 

\section{Conclusions}
\label{sec:conc}

To the question {\em ``How can we generate a graph with the right patterns?''}, we give a two-step answer: first generate the patterns, then optimize the graph.
We choose to focus on patterns that we can learn from random walks.
The reason is that many downstream applications use random walks, so graphs generated this way can have a significant impact.
The optimization step makes the graph robust to noise in the generated patterns.
It also lets us impose constraints on the graph, such as desired degree distributions.

\ourmethod puts this idea into practice via a fast, interpretable, and simple algorithm.
\ourmethod learns to predict random walk trajectories, which show how random walks transform a vector of node attributes. 
Then, using this predictor, we generate new trajectories.
Finally, we find the optimal graph that fits these trajectories.
The algorithm only requires a transformer and an optimizer. 
Experiments on several simulated and benchmark datasets show that \ourmethod outperforms the state of the art, and is among the fastest methods.

\appendix

\section{Smoothed Normalized Adjacency Matrix}
\begin{lemma}
    Let $A$, $d_i^\prime$, and $L$ be defined as in Definition~\ref{def:rwt}.
    Let $D' = \mathrm{diag}(d_i^\prime)$.
    Let $\beta_1 \geq \cdots \geq \beta_n$ be the eigenvalues of~$L$.
    Then, $\beta_1=1$ with the corresponding eigenvector being $(D'^{1/2} \mathbf{1})/\|D'^{1/2} \mathbf{1}\|$, and $\beta_n>-1$.
    % Then, $1 = \beta_1 \geq \beta_2 \geq \cdots \geq \beta_n > -1$.
    % The eigenvector corresponding to $\beta_1$ is $(D'^{1/2} \mathbf{1})/\|D'^{1/2} \mathbf{1}\|$.
    \label{lem:eig}
\end{lemma}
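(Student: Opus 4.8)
The plan is to work with the symmetric matrix $M := (1-\alpha)A + \alpha I$, so that $L = (D')^{-1/2} M (D')^{-1/2}$ with $D' = \mathrm{diag}(d_i')$. The key structural fact I would record first is that the row sums of $M$ are exactly the smoothed degrees: $\sum_j M_{ij} = (1-\alpha)d_i + \alpha = d_i'$, i.e. $M\mathbf{1} = D'\mathbf{1}$. From this the eigenpair for $\beta_1$ drops out immediately: setting $u := (D')^{1/2}\mathbf{1}$ and computing $Lu = (D')^{-1/2} M (D')^{-1/2}(D')^{1/2}\mathbf{1} = (D')^{-1/2} M \mathbf{1} = (D')^{-1/2} D'\mathbf{1} = (D')^{1/2}\mathbf{1} = u$ shows that $(D')^{1/2}\mathbf{1}$ is an eigenvector with eigenvalue $1$; normalizing gives the stated eigenvector.

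To argue that $1$ is in fact the \emph{largest} eigenvalue, I would bound the Rayleigh quotient of $L$. Writing $y := (D')^{-1/2}x$, a short calculation gives $x^T x - x^T L x = y^T D' y - y^T M y$, and after the $\alpha$ terms cancel this equals $(1-\alpha)\, y^T (D - A) y$, where $D = \mathrm{diag}(d_i)$ is the original degree matrix. Since $D - A$ is the ordinary graph Laplacian, it is positive semidefinite ($y^T(D-A)y = \tfrac12\sum_{ij} A_{ij}(y_i - y_j)^2 \ge 0$), and $1-\alpha > 0$, so $x^T L x \le x^T x$ for all $x$. Hence $\lambda_{\max}(L) \le 1$, and combined with the explicit eigenvector we conclude $\beta_1 = 1$.

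For the lower bound $\beta_n > -1$, I would show that $I + L$ is positive \emph{definite}. Using the same substitution $y := (D')^{-1/2}x$, I get $x^T(I+L)x = y^T D' y + y^T M y = (1-\alpha)\, y^T (D+A) y + 2\alpha\, \|y\|^2$. The matrix $D+A$ is the signless Laplacian, which is positive semidefinite ($y^T(D+A)y = \tfrac12\sum_{ij} A_{ij}(y_i+y_j)^2 \ge 0$), so the whole expression is at least $2\alpha\|y\|^2$. Because $\alpha > 0$ and $D'$ is invertible (all $d_i' > 0$), we have $\|y\| > 0$ whenever $x \ne 0$, so $x^T(I+L)x > 0$ for every nonzero $x$. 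Thus every eigenvalue of $L$ exceeds $-1$, giving $\beta_n > -1$.

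The calculations themselves are routine; the one place to be careful is the strictness in the last step. The term $2\alpha\|y\|^2$ is precisely what the self-loops ($\alpha > 0$) contribute, and it is what rules out $\beta_n = -1$ — without smoothing, a bipartite graph would make $D+A$ singular in a $+1$-eigendirection and push the smallest eigenvalue to exactly $-1$. So the main obstacle is less a difficulty than a point of attention: I must keep the $\alpha$ bookkeeping exact, so that the signless-Laplacian contribution and the strictly positive $2\alpha\|y\|^2$ term are cleanly separated. This is exactly the link between the strict bound here and the aperiodicity argument invoked in the proof of Theorem~\ref{thm:endvec}.
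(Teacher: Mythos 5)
Your proof is correct and follows essentially the same route as the paper: verify the eigenpair from the row-sum identity $M\mathbf{1}=D'\mathbf{1}$, show $I-L\succeq 0$ by reducing to the graph Laplacian $D-A$ after the $\alpha$ terms cancel, and show $I+L\succ 0$ from the signless Laplacian $D+A$ plus the strictly positive $2\alpha\|y\|^2$ contribution of the self-loops. The only cosmetic difference is that you argue via the Rayleigh quotient with the substitution $y=(D')^{-1/2}x$ and the standard sum-of-squares identities for $D\pm A$ directly, whereas the paper inserts an extra congruence through $D^{1/2}$ to reduce to $I\pm D^{-1/2}AD^{-1/2}$; the substance is identical.
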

\begin{proof}
We have $L=D'^{-1/2} \left( (1-\alpha) A + \alpha I \right) D'^{-1/2}$.
%, where $D' = \mathrm{diag}(d_i^\prime)$, and $A$ and $d_i^\prime$ are defined in Definition~\ref{def:rwt}.
Since $A\mathbf{1}=D\mathbf{1}$, we have $L D'^{1/2} \mathbf{1}=D'^{-1/2}((1-\alpha)A+\alpha I)\mathbf{1}=D'^{-1/2}((1-\alpha)D+\alpha I)\mathbf{1}=D'^{1/2}\mathbf{1}$.
So, $1$ is an eigenvalue of $L$ with eigenvector $(D'^{1/2} \mathbf{1})/\|D'^{1/2} \mathbf{1}\|$.
To show that it is the largest eigenvalue, we show that $I-L$ is positive semidefinite.
We have
\begin{align*}
    I-L &=D'^{-1/2}(D' -(1-\alpha)A - \alpha I)D'^{-1/2}=(1-\alpha)D'^{-1/2}(D -A)D'^{-1/2}\\
    &=(1-\alpha)D'^{-1/2}D^{1/2}(I-D^{-1/2}AD^{-1/2})D^{1/2}D'^{-1/2}.
\end{align*}
Now, $D$ and $D'$ are positive definite, and so is $I-D^{-1/2}AD^{-1/2}$, since
\begin{align*}
   \bm{x}^T(I-D^{-1/2}AD^{-1/2})\bm{x} &=\sum_i \bm{x}_i^2 - \sum_{(i,j) \in E} \frac{2 x(i) x(j)}{ \sqrt{d_i' d_j'} } = \sum_{(i,j) \in E} \left( \frac{ x(i) }{ \sqrt{d_i'} } - \frac{ x(j) }{ \sqrt{d_j'} } \right)^2 \geq 0,
\end{align*}
for any $\bm{x}$.
So $I-L$ is positive semidefinite.
Similarly, we show that $L$'s smallest eigenvalue is greater than $-1$ by showing that $I+L$ is positive definite.
\begin{align*}
    I+L &= D'^{-1/2}(D' +(1-\alpha)A + \alpha I)D'^{-1/2}=(1-\alpha)D'^{-1/2}(D +A)D'^{-1/2} + 2\alpha D'^{-1}\\
    &= (1-\alpha)D'^{-1/2}D^{1/2}(I + D^{-1/2}AD^{-1/2})D^{1/2}D'^{-1/2} + 2\alpha D'^{-1}.
\end{align*}
The second term is positive definite. The first term is positive semidefinite since
\begin{align*}
    \bm{x}^T(I+D^{-1/2}AD^{-1/2})\bm{x} &=\sum_{(i,j) \in E} \left( \frac{ x(i) }{ \sqrt{d_i'} } + \frac{ x(j) }{ \sqrt{d_j'} } \right)^2 \geq 0.
\end{align*}
\end{proof}

\bibliographystyle{splncs04}

\begin{thebibliography}{10}
\providecommand{\url}[1]{\texttt{#1}}
\providecommand{\urlprefix}{URL }
\providecommand{\doi}[1]{https://doi.org/#1}

\bibitem{albert2002statistical}
Albert, R., Barab{\'a}si, A.L.: Statistical mechanics of complex networks. Reviews of modern physics  \textbf{74}(1), ~47 (2002)

\bibitem{bojchevski2018netgan}
Bojchevski, A., Shchur, O., Z{\"u}gner, D., G{\"u}nnemann, S.: Netgan: Generating graphs via random walks. In: ICML. pp. 610--619 (2018)

\bibitem{de2018molgan}
De~Cao, N., Kipf, T.: Molgan: An implicit generative model for small molecular graphs. arXiv:1805.11973  (2018)

\bibitem{dobson2003distinguishing}
Dobson, P.D., Doig, A.J.: Distinguishing enzyme structures from non-enzymes without alignments. Journal of molecular biology  \textbf{330}(4),  771--783 (2003)

\bibitem{evdaimon2024neuralgraphgeneratorfeatureconditioned}
Evdaimon, I., Nikolentzos, G., Xypolopoulos, C., Kammoun, A., Chatzianastasis, M., Abdine, H., Vazirgiannis, M.: Neural graph generator: Feature-conditioned graph generation using latent diffusion models (2024), \url{https://arxiv.org/abs/2403.01535}

\bibitem{guo2022systematic}
Guo, X., Zhao, L.: A systematic survey on deep generative models for graph generation. IEEE Transactions on Pattern Analysis and Machine Intelligence  \textbf{45}(5),  5370--5390 (2022)

\bibitem{jin2018junction}
Jin, W., Barzilay, R., Jaakkola, T.: Junction tree variational autoencoder for molecular graph generation. In: ICML. pp. 2323--2332 (2018)

\bibitem{pmlr-v162-jo22a}
Jo, J., Lee, S., Hwang, S.J.: Score-based generative modeling of graphs via the system of stochastic differential equations. In: ICML (2022)

\bibitem{pmlr-v202-kong23b}
Kong, L., Cui, J., Sun, H., Zhuang, Y., Prakash, B.A., Zhang, C.: Autoregressive diffusion model for graph generation. In: ICML (2023)

\bibitem{liao2019efficient}
Liao, R., Li, Y., Song, Y., Wang, S., Hamilton, W., Duvenaud, D.K., Urtasun, R., Zemel, R.: Efficient graph generation with graph recurrent attention networks. NeurIPS  \textbf{32} (2019)

\bibitem{luo2023fast}
Luo, T., Mo, Z., Pan, S.J.: { Fast Graph Generation via Spectral Diffusion }. IEEE Transactions on Pattern Analysis \& Machine Intelligence  \textbf{46}(05),  3496--3508 (2024)

\bibitem{madhawa2019graphnvp}
Madhawa, K., Ishiguro, K., Nakago, K., Abe, M.: Graphnvp: An invertible flow model for generating molecular graphs. arXiv:1905.11600  (2019)

\bibitem{martinkus2022spectre}
Martinkus, K., Loukas, A., Perraudin, N., Wattenhofer, R.: Spectre: Spectral conditioning helps to overcome the expressivity limits of one-shot graph generators. In: ICML. pp. 15159--15179. PMLR (2022)

\bibitem{minello2024graph}
Minello, G., Bicciato, A., Rossi, L., Torsello, A., Cosmo, L.: Graph generation via spectral diffusion. arXiv:2402.18974  (2024)

\bibitem{sen2008collective}
Sen, P., Namata, G., Bilgic, M., Getoor, L., Galligher, B., Eliassi-Rad, T.: Collective classification in network data. AI magazine  \textbf{29}(3),  93--93 (2008)

\bibitem{shi2020graphaf}
Shi, C., Xu, M., Zhu, Z., Zhang, W., Zhang, M., Tang, J.: {GraphAF}: a flow-based autoregressive model for molecular graph generation. arXiv:2001.09382  (2020)

\bibitem{vignac2022digress}
Vignac, C., Krawczuk, I., Siraudin, A., Wang, B., Cevher, V., Frossard, P.: Digress: Discrete denoising diffusion for graph generation. arXiv:2209.14734  (2022)

\bibitem{wu2018moleculenet}
Wu, Z., Ramsundar, B., Feinberg, E.N., Gomes, J., Geniesse, C., Pappu, A.S., Leswing, K., Pande, V.: Moleculenet: a benchmark for molecular machine learning. Chemical science  \textbf{9}(2),  513--530 (2018)

\bibitem{you2018graphrnn}
You, J., Ying, R., Ren, X., Hamilton, W., Leskovec, J.: {GraphRNN}: Generating realistic graphs with deep auto-regressive models. In: ICML. pp. 5708--5717 (2018)

\bibitem{zhou2024unifyinggenerationpredictiongraphs}
Zhou, C., Wang, X., Zhang, M.: Unifying generation and prediction on graphs with latent graph diffusion (2024), \url{https://arxiv.org/abs/2402.02518}

\end{thebibliography}
\end{document}